\newtheorem{theorem}{Theorem}
\newtheorem{remark}{Remark}
\theoremstyle{definition}
\newtheorem{definition}{Definition}
\newcommand{\bR}{\mathbb{R}}
\newcommand{\bE}{\mathbb{E}}
\newcommand{\bx}{\mathbf{x}}
\newcommand{\bg}{\mathbf{g}}
\newcommand{\be}{\mathbf{e}}
\newcommand{\cA}{\mathcal{A}}
\newcommand{\cF}{\mathcal{F}}
\newcommand{\cD}{\mathcal{D}}
\newcommand{\cT}{\mathcal{T}}
\newcommand{\cP}{\mathcal{P}}
\newcommand{\bP}{\mathbb{P}}
\newcommand{\stat}{\textrm{STAT}}
\newcommand{\supp}{\textrm{supp}}
\tikzstyle{generator} = [circle, text centered, draw=black, fill=red!30]
\tikzstyle{model} = [rectangle, rounded corners, minimum width=3cm, minimum height=1cm,text centered, draw=black, fill=red!30]
\tikzstyle{arrow} = [thick,->,>=stealth]
\title{Achieving Adversarial Robustness Requires An Active Teacher}
\author[1]{Chao Ma \thanks{\texttt{chaoma@stanford.edu}}}
\author[1]{Lexing Ying \thanks{\texttt{lexing@stanford.edu}}}
\affil[1]{Department of Mathematics, Stanford University}
\begin{document}

\maketitle

\begin{abstract}
  A new understanding of adversarial examples and adversarial robustness is proposed by decoupling the data generator and the label generator (which we call the teacher). In our framework, adversarial robustness is a conditional concept---the student model is not absolutely robust, but robust with respect to the teacher. Based on the new understanding, we claim that adversarial examples exist because the student cannot obtain sufficient information of the teacher from the training data. Various ways of achieving robustness is compared. Theoretical and numerical evidence shows that to efficiently attain robustness, a teacher that actively provides its information to the student may be necessary. 
\end{abstract}

\section{Introduction}

The existence of adversarial examples restricts the application of deep learning in many fields with high demand on the robustness and security, such as autonomous driving and health care. Hence, improving adversarial robustness of deep neural networks has experienced extensive study, both theoretically and practically~\cite{akhtar2018threat,hao2020adversarial}. Originally, adversarial examples are found to be perturbed images whose perturbations are imperceptible to humans but cause huge error to the neural networks~\cite{szegedy2013intriguing,biggio2013evasion}. In most existing works, however, adversarial robustness is defined as robustness with respect to perturbations measured by the $l_p$ distance (e.g. \cite{szegedy2013intriguing,goodfellow2014explaining}). Specifically, a model $f_\theta(\cdot)$ is considered to be robust if the adversarial loss
\begin{equation}
    L_{\textrm{adv}}(f_\theta) = \bE_{(\bx, y)} \max_{\|\delta\|_p\leq\epsilon} l(f_\theta(\bx+\delta), y)
\end{equation}
is small, where $\epsilon$ is a pre-defined value and $l$ is some loss function~\cite{madry2017towards}. This simplification helps analysis and implementation. In spite of this, the robustness with small $l_p$ perturbations is very different from the robustness with respect to human-imperceptible perturbations~\cite{sen2019should}. A human-imperceptible perturbation may not have small $l_p$ norm~\cite{engstrom2019exploring,zhao2020towards}, and a perturbation with small $l_p$ norm may also not necessarily be imperceptible to humans~\cite{sharif2018suitability}. In Figure~\ref{fig: human_diff_lp}, inspired by optical illusions, we show an example of difference between some $l_p$ distances and human perception. This difference makes current ``adversarially robust'' models easily broken by newly-designed attacks. Besides $l_p$ distances, other measures, such as Wasserstein distance~\cite{wong2019wasserstein} and structural similarity (SSIM)~\cite{wang2004image}, are also shown to be different from human perception~\cite{sen2019should}. 

\begin{figure}
    \centering
    \begin{subfigure}{0.32\textwidth}
    \centering
    \includegraphics[width=0.95\textwidth]{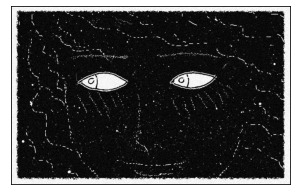}
    \caption{Face}
    \end{subfigure}
    \begin{subfigure}{0.32\textwidth}
    \centering
    \includegraphics[width=0.95\textwidth]{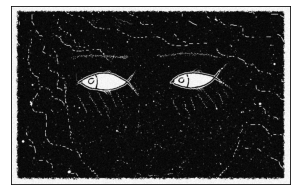}
    \caption{Fish}
    \end{subfigure}
    \begin{subfigure}{0.32\textwidth}
    \centering
    \includegraphics[width=0.95\textwidth]{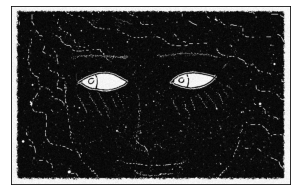}
    \caption{Face}
    \end{subfigure}
    \caption{Difference between human perception and $l_2$ distances illustrated by an optical illusion. {\bf (a)} The images looks like a face; {\bf (b)} The image looks like two fishes; {\bf (c)} The image in (a) adding a noise. The $l_0$, $l_1$ and $l_2$ distances between (a) and (b) are $15037$, $2534.44$ and $43.02$, respectively. The $l_0$, $l_1$ and $l_2$ distances between (a) and (c) are $812311$, $63413.43$ and $89.17$, respectively. Though the images in (a) and (c) are perceptually the same, their $l_p$ distances are greater than the distances between (a) and (b), which are perceptually different. (The original image is taken from https://pixabay.com/illustrations/fairy-tale-fish-portrait-1077859/)}
\label{fig: human_diff_lp}
\end{figure}

In this paper, we propose a conditional explanation of adversarial robustness, which highlights the role of human labeler in defining the adversarial examples. Specifically, we decouple the data generator with the labeler, and make two definitions: the {\bf teacher} is an object or a mechanism that assigns true labels to data points, and the {\bf student} is a machine learning model used to learn from the data and labels. Within our framework, adversarial robustness is not a universal concept defined unconditionally for any learning problem (like $l_p$ robustness), but rather a relative concept conditioned on a certain teacher. The teacher is usually human, but can also be other objects such as physical processes or neural networks. A student model is said to be (strongly) adversarially robust with respect to a teacher if it can correctly classify any data the teacher can classify with certainty. This is possible because in our framework the teacher has an ``uncertain set'', and it does not assign labels to data within this set. Hence a robust student model does not need to have the same decision boundary as the teacher. A weaker version of adversarial robustness is also defined by considering the data produced by an ``attack'', instead of all the data that the teacher can classify. This weak definition of adversarial robustness can cover the $l_p$ robustness, but in a more proper way. We show that our definitions of adversarial robustness are not equivalent with the $l_p$ robustness by simple illustrative examples---$l_p$ robust classifier may not be adversarially robust, vice versa. 

Based on this new understanding, we point out two reasons that cause adversarial examples:
(1) Some features the student uses to make classification are imperceptible to the teacher.
(2) The training data do not provide sufficient information of the classification mechanism of the teacher, e.g. which feature the teacher uses to make classification.
Combining the two reasons above, we argue that the adversarial examples are caused by insufficient (out-of-distribution) information of the teacher provided by the training data. Without necessary information, the student model cannot select the robust solution among many solutions that perform well on the original data distribution. Therefore, to achieve adversarial robustness, or at least alleviate adversarial vulnerability, more teacher information should be provided to the student model. This can be achieved in two ways:
\begin{enumerate}
    \item {\bf An active student:} The student model asks information from the teacher, and the teacher passively answers the student's questions, and does not provide extra information.
    \vspace{-5mm}
    \item {\bf An active teacher:} The teacher directly provides information to the student about how it makes classification.
\end{enumerate}
We show theoretically that the first way is not always efficient. Specifically, we prove that in some cases an active student cannot get enough information to achieve robustness in a reasonable time from a passive teacher. Hence, we conclude that an active teacher is required to achieve real adversarial robustness. By simple illustrative examples we show how an active teacher helps the student to learn a robust model, and better robustness can be achieved when more information is provided by the teacher.

Our contributions are summarized as follows:
\begin{itemize}
    \item We propose a new conditional framework of understanding adversarial robustness. In this framework, the teacher is decoupled from the distribution that generates the data, and robustness is defined as a relative concept of a student model with respect to the teacher.
    \item Based on the new understanding of adversarial robustness, we demonstrate that achieving robustness requires additional teacher information except the original training data. 
    \item Using both theoretical and empirical approaches, we show that an active teacher helps attaining robustness, while a passive teacher with an active student may not be as efficient.
\end{itemize}

\section{Related work}
Adversarial examples were first introduced in~\cite{szegedy2013intriguing}. The work identified data points that are very close to another point (imperceptible to human) but lead to totally different predictions of the model. Several attack methods were then proposed based on the idea of finding the direction in the input space in which the model's output changes fastest~\cite{goodfellow2014explaining, moosavi2016deepfool, papernot2016limitations, kurakin2016adversarial, kurakin2016adversarial2}. Due to the significance of the security of machine learning models, defenses for adversarial attacks also received extensive study (\cite{papernot2016distillation, wong2018provable, buckman2018thermometer, guo2017countering}, etc). Adversarial training~\cite{goodfellow2014explaining, kurakin2016adversarial2, madry2017towards, tramer2017ensemble} is a class of methods that can effectively defense against certain attacks. It trains a robust model by including adversarial examples into the training set. Large volume of works arise during an arm race between attacks and defenses. Interested readers can refer to~\cite{akhtar2018threat} or~\cite{hao2020adversarial} for a thorough review of the attack and defense methods in different application fields. 

On the other side of practical methods, theoretical understanding of  adversarial examples also drew attention. Explanations of adversarial vulnerability of machine learning models were provided from different perspectives, including linearity~\cite{goodfellow2014explaining}, decision boundary geometry~\cite{fawzi2016robustness, moosavi2017analysis}, low flexibility of the networks~\cite{fawzi2018analysis}, non-robust features~\cite{ilyas2019adversarial}, etc. In particular, \cite{ilyas2019adversarial} proposed that adversarial examples exist because the model learns non-robust features. This viewpoint can be put into our framework: non-robust features, though with good generalization performance, are not used by the teacher, the student cannot reject these features since the training data do not provide enough teacher information. 

Mathematical analysis were also conducted, e.g. to show the inevitable existence of adversarial examples~\cite{shafahi2018adversarial, fawzi2018adversarial}, the trade-off between adversarial robustness and clean data accuracy~\cite{tsipras2018robustness}, the trade-off between robustness and classifier complexity~\cite{nakkiran2019adversarial}, and the provable robustness of highly over-parameterized models~\cite{zhang2020over}. 

Due to its benefits on analysis and implementation, the $l_p$ distances are used to quantify robustness in most works mentioned above, especially the cases of $p=0,2,\infty$. However, $l_p$ distance is obviously different from human perception. In~\cite{engstrom2019exploring,zhao2020towards}, data pairs that are imperceptible to human but have large $l_p$ distances are identified. On the other side, \cite{sharif2018suitability} found image pairs that are close measured by the $l_p$ norm but look very different for humans. Attempts are made to find metrics that align better with human perception, such as the Wasserstein distance~\cite{wong2019wasserstein, wu2020stronger}, SSIM~\cite{gragnaniello2019perceptual} and other perceptibility metrics~\cite{laidlaw2020perceptual, jordan2019quantifying}. However, human experiments and statistical tests in~\cite{sen2019should} show significant difference between human perception and these metrics.

Among all the theoretical explanations of adversarial examples, the understanding provided in~\cite{wang2016theoretical} is most relevant to our work. Like what we do in this paper, the authors of~\cite{wang2016theoretical} also decouple the data generator and the label generator (which they call the oracle), and compare topological properties of the oracle and the student model. They claim that adversarial examples are caused by the difference of the two (pseudo)metric spaces corresponding to the student and the oracle. Our work is different from theirs in at least two ways: (1) After decoupling the data generator and the teacher, we directly compare the decision regions and decision boundaries of the teacher and the student, instead of considering metric spaces. The metric spaces help mathematical analysis, but are hard to verify and identify in practice. (2) Based on the decoupled understanding of adversarial examples, we further explore and compare possible ways to achieve adversarial robustness, and suggest that an active teacher is required to efficiently align student decision regions with those of the teacher in order to achieve adversarial robustness.

\section{A conditional framework of adversarial robustness}
\subsection{Decoupling data generator and teacher in supervised learning}

In this section we introduce a conditional framework to understand adversarial examples and adversarial robustness. We start from a decoupled understanding of supervised learning problems. Traditional formulation of supervised learning problems consists of two parts: a joint distribution of data and label $(\bx, y)$, and a student model which learns the relation between $\bx$ and $y$ using the training data sampled from the distribution. Compared with the traditional ones, our formulation of supervised learning decouples the process of generating $\bx$ and $y$, and consists of three components: the data generator, the teacher, and the student. 
\begin{itemize}
    \item {\bf The data generator} is a distribution $\mu$ from which data points $\bx$ are sampled, to form training and testing data sets.
    \item {\bf The teacher} is a mechanism to assign labels to the data points. It takes data $\bx$ as input and outputs a label $y$ associated with the data. The teacher can be a deterministic function or a stochastic mechanism. For practical machine learning problems the teacher is usually human. We use $T$ to denote the teacher.
    \item {\bf The student} is a machine learning model trained using a set of data and labels generated by the data generator and the teacher, to learn the labeling rules of the teacher. The student takes data points as inputs and the predicted labels for the input data as outputs. We use $S$ to denote the student.
\end{itemize}
Figure~\ref{fig: 1} shows the learning procedure of our machine learning model: the data generator generates data, the teacher assigns labels to the data, forming a dataset, and finally the student is trained using the dataset.

\begin{figure}[!h]
    \centering
    \includegraphics[width=0.7\columnwidth]{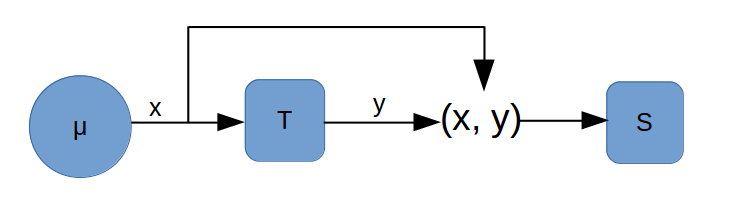}
    \caption{The learning procedure of the machine learning model considered in this paper.}
    \label{fig: 1}
\end{figure}

In our formulation we decouple the data generator and the teacher, so we can study the teacher alone. The decoupled perspective highlights that the teacher can work out of the data distribution $\mu$, and we do not have access to all the information of the teacher by just sampling data from $\mu$. As we are going to clarify in the next section, this is the essential reason for the existence of adversarial examples. Finally, note that the traditional formulation can also be included into our framework, by considering the data generator to be the marginal distribution (of $\bx$) and the teacher to be the conditional distribution (of $y$ conditioned on $\bx$).

\subsection{The conditional definition of adversarial examples}

By decoupling the teacher from the data generator, we can now examine adversarial examples and define adversarial robustness in a conditional way. Specifically, adversarial robustness is a property of a student model conditioned on a certain teacher. It involves both the student and the teacher.

We first express the ideas by a simple example. Assume $\bx\in X\subset\bR^d$. Consider a binary classification problem with two classes $A$ and $B$. Since sometimes a classifier cannot assign a label with high confidence for any $\bx$ in $X$, we assume that the teacher can output three values: $A$, $B$, and $U$. Here $A$ and $B$ mean the input data belongs to classes $A$ and $B$, respectively, and $U$ means the teacher is uncertain with the input data. This kind of classifiers are also studied as ``selective classifier'' in previous works~\cite{el2010foundations, geifman2017selective}. Let $\Omega_A\subset\bR^d$ be the set in which the teacher outputs $A$:
$$\Omega_A := \{\bx\in X: T(\bx)=A\}.$$
$\Omega_B$ and $\Omega_U$ are similarly defined. We require $\Omega_A\cup\Omega_B\cup\Omega_U=X$.

\begin{remark}
The existence of class $U$ is reasonable given that even for humans it is very common to be uncertain with some hard-to-classify images. We can understand the model as a classification problem with three classes but we are only interested in two of them. In traditional understanding of supervised learning the class $U$ is not highlighted because the data distribution is coupled with the teacher and naturally concentrates in $\Omega_A\cup\Omega_B$. But to address adversarial robustness the uncertain class $U$ becomes important because we have to consider adversarially generated unnatural data distributions. 
\end{remark}

\begin{remark}
The most interesting teachers are humans, which is the case for most CV and NLP problems. However, it can also be objects such as machine learning models, e.g. in the case of knowledge distillation~\cite{hinton2015distilling}. For an simple example, assume we have a neural network $N: \bR^d\rightarrow[0,1]$, which predicts the probability that the input belongs to class $A$. Then the teacher can be defined as 
\begin{equation*}
    T(\bx)=\left\{ \begin{array}{cc}
        A & \text{if}\ N(\bx)>0.99, \\
        B & \text{if}\ N(\bx)<0.01, \\
        U & \text{otherwise},
    \end{array}\right.
\end{equation*}
i.e. the classes $A$ and $B$ are assigned only when the neural network has high confidence. Hence, adversarial robustness can be considered with respect to general teachers, as in the examples below.
\end{remark}

Now we can give a formal description of adversarial examples within our framework. Usually an adversarial example is defined as a data point wrongly classified by a machine learning model, which is very close to another correctly classified data point, and the difference between the two data points are imperceptible to humans. In our framework, we let humans be the teacher and the machine learning model be the student. We highlight the fact that the student gives different prediction from the teacher, then the above definition of the adversarial examples can be rephrased as follow:
\begin{center}
    {\it An adversarial example is a data point $\bx$ that satisfies $T(\bx)=A\ \textrm{or}\ B$ and $T(\bx)\neq S(\bx)$.}
\end{center}
Later examples will show that, as long as adversarial examples described above exist, there will naturally be adversarial examples perceptually close to a correctly classified data point.

In the above statement, an adversarial example can be understood as a data point that the teacher can classify with high confidence, but the student gives different label from the teacher. This kind of data exists because the student is trained by data sampled from $\mu$, but $\mu$ cannot provide full information of the teacher, e.g. the support of $\mu$ cannot fully cover $\Omega_A$ and $\Omega_B$. As an illustrative example, (See the left panel of Figure~\ref{fig: 2}) let $\bx=(x_1,x_2)\in[-1,1]^2$ and the teacher is induced by a linear model:
\begin{equation*}
    T(\bx)=\left\{\begin{array}{cl}
        A & \text{if}\ x_1\geq0.5, \\
        B & \text{if}\ x_1\leq-0.5, \\
        U & \text{otherwise}.
    \end{array}\right.
\end{equation*}
On the other side, assume that $\mu$ is a uniform distribution on $[0.5,1]\times[0.5,1]\cup[-1,-0.5]\times[-1,-0.5]$. Then if the student makes max margin classification, the decision boundary will be close to $x_1+x_2=0$. Adversarial examples appear in the second and fourth quadrant (as show by the grey areas in the figure). 

\begin{figure}[!h]
    \centering
    \begin{subfigure}{0.45\textwidth}
    \centering
    \includegraphics[width=0.95\textwidth]{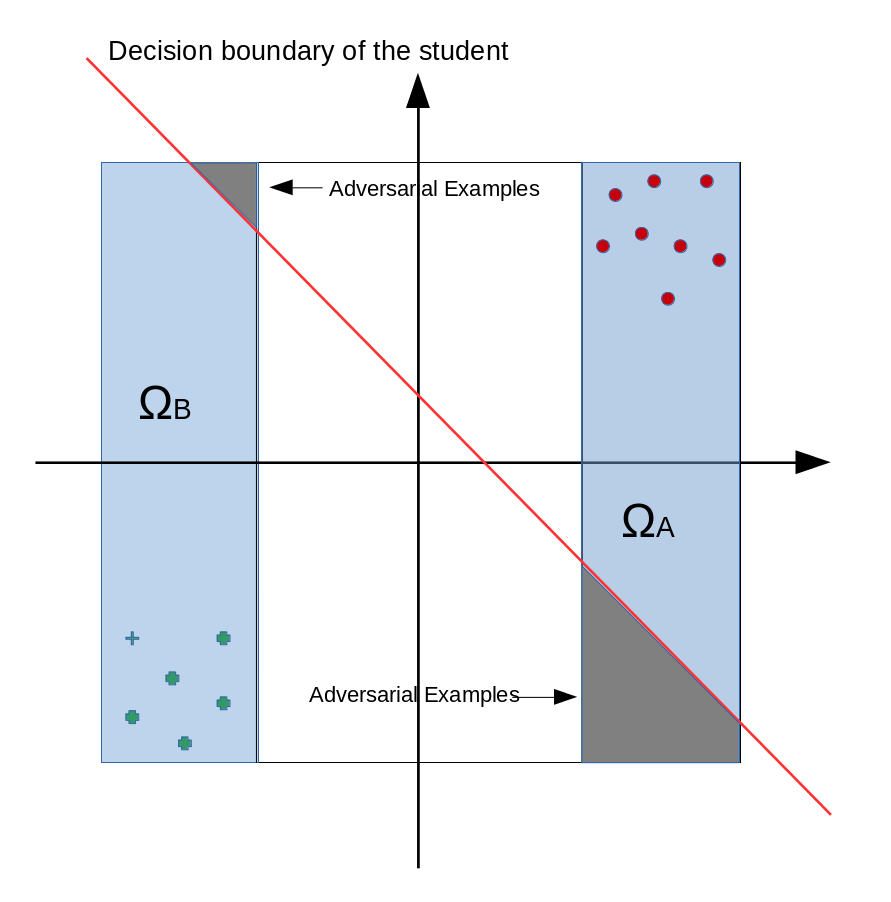}
    \caption{}
    \end{subfigure}
    \begin{subfigure}{0.45\textwidth}
    \centering
    \vspace{5mm}
    \includegraphics[width=0.83\textwidth]{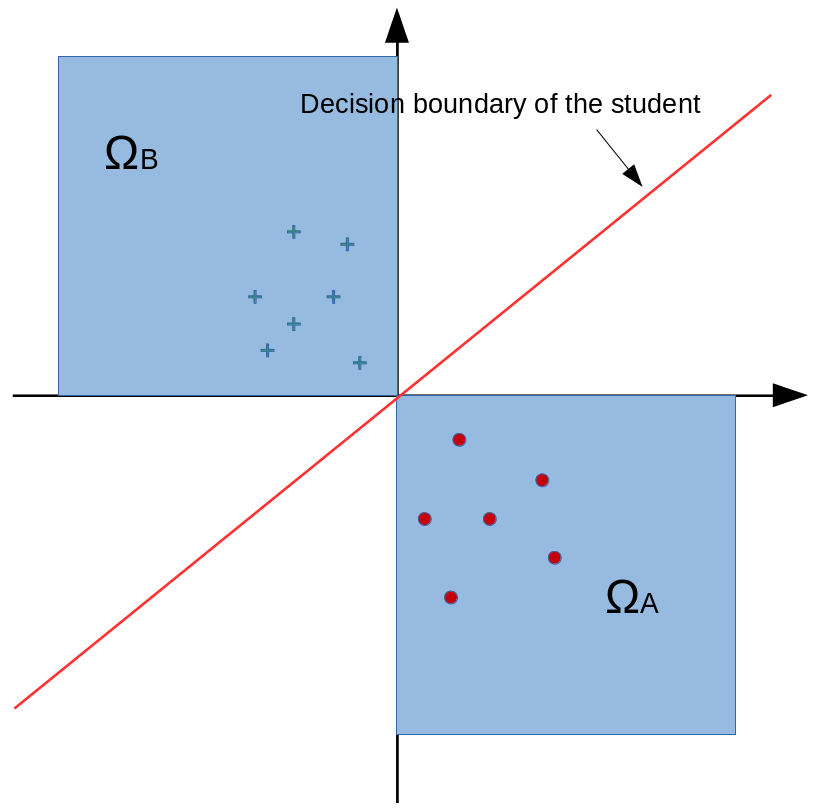}
    \caption{}
    \end{subfigure}
    \caption{{\bf (Left)} The simple illustration of adversarial examples. $l_2$ robust model may not be adversarially robust. {\bf (Right)} Strongly adversarially robust model may not be $l_2$ robust..}
    \label{fig: 2}
\end{figure}

With the above definition of adversarial examples, we can state the following definition of adversarial robustness:

{\it A student model $S$ is adversarially robust with respect to a teacher $T$, if $T(\bx)=S(\bx)$ for all $\bx\in\Omega_A\cup\Omega_B$.}

We call this definition {\bf Strong Adversarial Robustness}, because it requires the student to generalize on any distribution in $\Omega_A\cup\Omega_B$, i.e. it should give correct classification on any data point that the teacher can classify with high confidence. It is clear that in this situation no perturbation imperceptible to the teacher can lead to a change of classification of the student. Note that strong adversarial robustness does not require the student to be the same as the teacher, due to the existence of $U$. 

The definition of strong adversarial robustness can be extended to multi-class classification problems. Assume there are $K\geq2$ classes denoted by $C_1, C_2, ..., C_K$, and let $\Omega_{C_k}$ be the regions where $T$ outputs $C_k$, for $k=1,2,...,K$. Then, we have the following definition for strong adversarial robustness:
\begin{definition}{(Strong adversarial robustness)}
Let $\mu$, $T$, $S$ be the data distribution, teacher, and student, respectively. Then, $S$ is strongly adversarially robust with respect to $T$ if 
\begin{equation*}
    S(\bx)=T(\bx),\ \forall \bx\in\bigcup_{k=1}^K\Omega_{C_k}.
\end{equation*}
\end{definition}

Besides strong adversarial robustness, we can also define a weaker version of adversarial robustness. In this case, we consider an attack $\cA$ which takes the original data distribution and the student model as inputs and a family of adversarial data distributions as output. We say the student $S$ is adversarially robust with respect to the teacher $T$ and the attack $\cA$ if $S$ generalizes as well as $T$ on all the distributions generated by the attack $\cA$. A mathematical definition is given as follows.
\begin{definition}{(Adversarial robustness with respect to an attack)}\label{def: weak}
Let $\mu$, $T$, $S$ be the data distribution, teacher, and student, respectively. Let $\cA$ be the attack, and 
$$\cP=\cA(\mu,S),$$
where $\cP$ is a family of adversarial distributions given by $\cA$ with input $\mu$ and $S$. Then, the student $S$ is $(\cA,\mu,\epsilon)$-adversarially robust with respect to $T$, if
\begin{equation}\label{eqn: weak_adv}
    \inf_{\nu\in\cP}\bP_{\bx\sim\nu}\left(T(\bx)=S(\bx)|\bx\in\bigcup_{k=1}^K\Omega_{C_k}\right) > 1-\epsilon.
\end{equation}
\end{definition}
By the definition above, the student is adversarially robust if it can generalize well over the distributions generated by a specific attack, on the regions where the teacher performs with certainty. The attack can take many forms. For example, the attack with small $l_p$ perturbations produces all the distributions whose support is within a small distance $\delta$ of the support of $\mu$:
\begin{equation*}
    \cP = \left\{ \nu: \forall \bx\in\supp(\nu),\ \exists\bx'\in\supp(\mu),\ s.t.\ \|\bx-\bx'\|_p\leq\delta \right\}.
\end{equation*}
Note that the weak adversarial robustness with above attack is not exactly equivalent with the commonly studied $l_p$ robustness. Because in our definition we only require the student to classify correctly in the region where the teacher can make confident classification, instead of giving the same classification within the $l_p$ ball with radius $\delta$ centered at any data point $\bx$. (See the conditional probability in~\eqref{eqn: weak_adv}) As a results, our definition of adversarial robustness does not conflict with the clean data accuracy (the accuracy on $\mu$)---the student can be robust at the same time of having good accuracy on $\mu$. This is a more proper definition of $l_p$ robustness.  

As a second example, the attack can also be all the distributions whose Radon-Nikodym derivative with respect to $\mu$ is close to $1$:
\begin{equation*}
    \cP = \left\{\nu: \frac{d\nu}{d\mu}\in[c, \frac{1}{c}]\right\},
\end{equation*}
for some constant $c>0$. As a third example, it can also depend on the student $S$, such as the fast gradient method:
\begin{equation*}
    \cP = \left\{ \nu=\Gamma\#\mu:\ \Gamma(\bx) = \bx+\delta\frac{\bg}{\|\bg\|},\ \bg=\frac{\partial S(\bx)}{\partial\bx} \right\}.
\end{equation*}
Finally, strong adversarial robustness can be viewed as robustness with an attack that produces all the probability distributions on $X$.

\subsection{Relation with $l_p$ robustness}

As we mentioned above, $l_p$ robustness is appropriately covered by Definition~\ref{def: weak}. In this section, we focus on traditional $l_2$ robustness and compare it with our definition of strong robustness. Using simple illustrative examples, we show that $l_2$ robust students may not be strongly robust, and strongly robust students may not be $l_2$ robust, either. 

The example in the left panel of Figure~\ref{fig: 2} shows a student that is $l_2$ robust but not strongly adversarially robust with respect to the teacher. In the example, the teacher conducts classification with only $x_1$, and does not use the feature $x_2$. Hence, data points with the same $x_1$ but different $x_2$ are imperceptible to the teacher. However, the student gathers teacher information only from the training data, hence it is reasonable for it to make max margin classification. Using the max margin decision boundary, the student is $l_2$ robust even when the perturbation is large, but adversarial examples exist. For example, for a data point in the grey area on the upper-left part of the figure, the student will make wrong classification, while for the teacher this data a looks similar to the ones on the bottom-left side because they have the same $x_1$.

In the right panel of Figure~\ref{fig: 2}, we show an example that adversarial robustness does not imply $l_2$ robustness. In this example, the two classes lie in the second and the fourth quadrants, respectively. And there is no margin between the two classes. The student with decision boundary shown by the red line is strongly adversarially robust with respect to the teacher, because the decision boundary passes through the origin. However, since there is no margin between $\Omega_A$ and $\Omega_B$, the student is not $l_2$ robust with any $\epsilon>0$, because for any $\epsilon$ we can always find a sample in $\Omega_A\cup\Omega_B$ whose $l_2$ distance from the decision boundary is smaller than $\epsilon$. In real problems, such ``zero margin'' situation is quite common. The teacher's decision might have sudden jumps from one class to another in a small region, for example, when the teacher decides the sign of a number, or compares the sizes of two objects. Humans are usually good at these tasks.

\section{Adversarial robustness requires active teacher}

By the new understanding of adversarial examples, we tentatively conclude that adversarial examples exist because the student does not have sufficient information of the teacher. To achieve adversarial robustness, additional teacher information should be incorporated into the student. This can be achieved in two ways:
\begin{enumerate}
    \item {\bf A passive teacher and an active student:} In this approach, the teacher provides information to the student only when the student asks for information from the teacher. For instance, in addition to the training data, the student generates extra data and asks the teacher to classify these data. Then, the new data and labels are included to the training set to train an updated student model. (This is like an interactive way of adversarial training). 
    \vspace{-2mm}
    \item {\bf An active teacher:} The teacher directly tells the student information on how it makes classification, such as the features used, invariances, sparsity, or the structure of the model, etc. Then, the student tries to encode the information into its learning procedure, e.g. taking specially designed network structure and learning algorithm.
\end{enumerate}
In this section, we show that an active teacher is preferred, and may even be necessary, for the student to be adversarially robust. 

In the setting of a passive teacher, we theoretically prove that a simple query-based active student cannot efficiently learn robust models. On the other hand, in the setting of an active teacher, we show by numerical examples how can the teacher ``teach'' the student to be robust.

\subsection{A passive teacher and an active student}

An active student can acquire teacher information in many different ways. In this section, we consider one of the most natural ways to ask for teacher information---feature query. Specifically, every time the student provides the teacher with a feature, and the teacher returns the correlation of the feature with the labels (the correlation is computed in a data distribution generated by the attack, hence it helps achieving adversarial robustness and cannot be approximated with $\mu$). In this way, the student asks the teacher ``to what extent do you use this feature to make classification'', and the teacher answers the question with a score. Then, the student updates itself according to the teacher's answer. Intuitively, the student can learn a robust classifier if it identifies all the features used by the teacher to make classification. However, since there are numerous possibilities when choosing the features to query, it can be hard to find the right ones. In this section, we borrow the theories of hardness of learning to show that in some cases it is impossible to efficiently learn a robust student with feature querying, even though we have a very weak attack which only produces one single adversarial distribution.

Mathematically, we put our ``feature query'' setting into the statistical query framework~\cite{kearns1998efficient}. Let $X=\{0,1\}^d$, $\cD$ be some probability distribution on $X$. Let $T: X\rightarrow\{0,1\}$ be the teacher. Then, a statistical query $\stat(T,\cD)$ takes a function $\chi: X\times\{0,1\}\rightarrow\{0,1\}$ and returns $\bE_{\bx\sim\cD}\chi(\bx,T(\bx))$ with some tolerance $\alpha$, i.e. the returned value lies in $[\bE_{\bx\sim\cD}\chi(\bx,T(\bx))-\alpha, \bE_{\bx\sim\cD}\chi(\bx,T(\bx))+\alpha]$. Obviously, the correlation of a feature $h: X\rightarrow\{0,1\}$ with the labels, $\bE_{\bx\sim\cD} h(\bx)T(\bx)$, is a statistical query with $\chi(\bx, T(\bx))=h(\bx)T(\bx)$. Statistical queries are powerful because it can return the correlation of any feature with the teacher's output with high accuracy, of course including those features used by the teacher.

It is proven in~\cite{kearns1998efficient} that parity functions are not efficiently learnable from statistical queries:

\begin{theorem}\label{thm: parity}
{\bf (Theorem 5 of~\cite{kearns1998efficient})}
Let $\cF_d$ be all parity functions over $\{0,1\}^d$ and $\cD$ be the uniform distribution on $\{0,1\}^d$. Then, for any fixed accuracy $\varepsilon$, there does not exist polynomials $p(d)$ and $q(d)$, and an algorithm using statistical queries with tolerance $\alpha\geq1/q(d)$, such that for any $f\in\cF_d$ the algorithm can return a hypothesis $h$ within $p(d)$ statistical queries that satisfies
\begin{equation}
    \bP_{\bx\sim\cD}(h(\bx)=f(\bx)) > 1-\varepsilon.
\end{equation}
\end{theorem}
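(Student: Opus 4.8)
The statement is quoted verbatim from~\cite{kearns1998efficient}, so the plan is to reconstruct the statistical-query-dimension argument that underlies it. The guiding idea is an adversary principle: an oracle that always returns a \emph{target-independent} default answer can fool any learner, because under the uniform $\cD$ the parities are pairwise orthogonal, so each bounded query can ``see'' only polynomially many of the $2^d$ candidate targets. I would work in the $\pm1$ character representation $\chi_S(\bx)=(-1)^{\sum_{i\in S}x_i}$ for $S\subseteq\{1,\dots,d\}$, which is orthonormal in $L^2(\cD)$, and identify the $\{0,1\}$-valued parity of the theorem with $f_S=(1-\chi_S)/2$.

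First I would reduce a single query to one Fourier coefficient. Given $\chi:X\times\{0,1\}\to\{0,1\}$, split it along the label bit by setting $\chi_0(\bx)=\chi(\bx,0)$ and $\chi_1(\bx)=\chi(\bx,1)$, so that
$$\chi(\bx,f(\bx))=\chi_0(\bx)+\big(\chi_1(\bx)-\chi_0(\bx)\big)f(\bx)$$
for any $\{0,1\}$-valued $f$. Taking the expectation against $f=f_S$ and substituting $f_S=(1-\chi_S)/2$, the honest answer is
$$P(S):=\bE_{\bx\sim\cD}\,\chi(\bx,f_S(\bx))=v-\tfrac12\,\widehat{g}(S),\qquad g:=\chi_1-\chi_0,$$
where $\widehat{g}(S)=\bE_{\bx\sim\cD}[g(\bx)\chi_S(\bx)]$ and $v$ is independent of $S$. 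Thus the only target-dependence of the honest answer is carried by the single coefficient $\widehat{g}(S)$.

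The quantitative heart of the argument, which I expect to be the main obstacle, is a Parseval/Bessel estimate: since $g$ takes values in $\{-1,0,1\}$ we have $\sum_S\widehat{g}(S)^2=\|g\|_{L^2(\cD)}^2\le1$, so at most $1/(4\alpha^2)\le q(d)^2/4$ indices $S$ satisfy $|\widehat{g}(S)|>2\alpha$. For every other target, $|P(S)-v|\le\alpha$, so an adversarial oracle may legally return the fixed value $v$ and thereby rule out only $O(q(d)^2)$ of the candidate parities per query. This is precisely the statement that parities have super-polynomial statistical-query dimension, and essentially all the content lives here.

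Finally I would run the counting/indistinguishability argument. Let the oracle answer $v$ on each of the $p(d)$ queries; the set of targets consistent with the entire transcript then has size at least $2^d-p(d)\,q(d)^2/4$, which for large $d$ far exceeds any constant. Since the algorithm sees one fixed transcript it outputs one fixed hypothesis $h$, and writing $\chi_h=1-2h\in\{\pm1\}$ one checks $\bP_{\bx\sim\cD}(h(\bx)=f_S(\bx))>1-\varepsilon$ iff $\bE_{\bx\sim\cD}[\chi_h\chi_S]>1-2\varepsilon$. A second application of Parseval, $\sum_S\bE_{\bx\sim\cD}[\chi_h\chi_S]^2=\|\chi_h\|_{L^2(\cD)}^2=1$, shows that $h$ can be $\varepsilon$-accurate for at most $1/(1-2\varepsilon)^2$ parities. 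Because the surviving set is exponentially large, the adversary can designate as the true target a surviving parity on which $h$ is \emph{not} $\varepsilon$-accurate, contradicting the assumed existence of the polynomials $p,q$ and the algorithm. This closes the proof for every fixed $\varepsilon<1/2$.
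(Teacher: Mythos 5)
The paper gives no proof of this statement: it is imported verbatim as Theorem 5 of the cited Kearns paper, so there is nothing internal to compare against. Your reconstruction---splitting each query along the label bit so that its target-dependence is a single Fourier coefficient $\widehat{g}(S)$, invoking Parseval to bound the number of parities any one query can distinguish, and closing with the adversary/counting argument (correctly restricted to $\varepsilon<1/2$)---is the standard and correct proof of that result, essentially the same argument as in the original source.
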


Based on the theorem above, we can show that an active student using feature queries cannot always learn adversarially robust classifiers efficiently. Still consider $X=\{0,1\}^d$. Now, let $\mu$ be the uniform distribution on two points $(0,0,...,0)$ and $(1,0,...,0)$, $\cA$ be an attack, and $\nu$ be the uniform distribution on $X$, which is generated by the attack $\cA$. That is to way, the output of this weak attack contains only one distribution, and even does not depend on the student. Finding a robust classifier requires the student to generalize on $\nu$. Consider the set of teachers $\cT$ to be all parity functions over $X$ with the first coordinate included, i.e.
\begin{equation}
    \cT = \left\{T(\bx)=\frac{1}{2}\left(1+(-1)^{\sum_{i\in S}\bx_i}\right):\ S\subset[d], 1\in S\right\}.
\end{equation}
Then, since $\mu$ only supports on two points, teachers in $\cT$ can be learned by a simple linear regression on $\mu$. However, by Theorem~\ref{thm: parity}, they cannot be learned efficiently on $\nu$ using feature queries. Hence, the student cannot learn adversarially robust classifiers with respect to the teachers in $\cT$, if the attack gives the distribution $\nu$. To summarize, we have the following theorem.

\begin{theorem}\label{thm: 2}
  Let $\cT$, $\mu$, $\nu$, $\cA$ be defined above. Let $T$ be a teacher from $\cT$ and $S$ be a
  student which has access to the data pairs $(\bx, T(\bx))$ where $\bx$ is sampled from
  $\mu$. Besides, the student can get feature queries $\bE_{\bx\sim\nu} h(\bx)T(\bx)$ for any
  feature $h:X\rightarrow\{0,1\}$, with a tolerance $\alpha$ that satisfies $\alpha\geq 1/q(d)$ for
  some polynomial $q(d)$. Then, for any fixed $\varepsilon>0$, there does not exist a polynomial
  $p(d)$ such that for any $T\in\cT$ the student can learn an $(\cA,\mu,\varepsilon)$-Adversarially
  robust classifier with respect to $T$ within $p(d)$ feature queries.
\end{theorem}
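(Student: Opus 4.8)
The plan is to reduce the claim to the statistical-query hardness of parities (Theorem~\ref{thm: parity}). First I would unpack what $(\cA,\mu,\varepsilon)$-adversarial robustness means in this instance. Since every $T\in\cT$ is a genuine parity that outputs a label in $\{0,1\}$ at every point, the teacher's uncertain region is empty and $\bigcup_k\Omega_{C_k}=X$, so the conditioning event in~\eqref{eqn: weak_adv} has probability one. Because $\cA(\mu,S)=\{\nu\}$ is the single uniform distribution, robustness collapses to the requirement $\bP_{\bx\sim\nu}(S(\bx)=T(\bx))>1-\varepsilon$; that is, \emph{being robust is exactly learning the parity $T$ to accuracy $1-\varepsilon$ under the uniform distribution $\nu$}. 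Next I would observe that a feature query is a statistical query: taking $\chi(\bx,y)=h(\bx)\,y$ gives $\bE_{\bx\sim\nu}\chi(\bx,T(\bx))=\bE_{\bx\sim\nu}h(\bx)T(\bx)$, with the same tolerance $\alpha$. Finally, the examples drawn from $\mu$ carry no information: for every $S$ with $1\in S$ one has $T(0,\dots,0)=1$ and $T(1,0,\dots,0)=0$, so the two labeled points the student ever sees are identical across all teachers in $\cT$ and can be hard-coded. Hence, for the purposes of this problem, the student is a statistical-query learner for the class $\cT$.

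It remains to show that $\cT$ is not efficiently SQ-learnable. Theorem~\ref{thm: parity} concerns \emph{all} parities $\cF_d$, whereas $\cT$ is the subclass of parities containing the first coordinate, so I cannot quote it verbatim; instead I would reduce learning $\cF_{d-1}$ to learning $\cT$. Write $\bx=(x_1,\bx')$ with $\bx'\in\{0,1\}^{d-1}$ and note that each $T\in\cT$ factors as $T(\bx)=x_1\oplus g(\bx')$ for a parity $g$ over the last $d-1$ coordinates, with $T\mapsto g$ a bijection onto $\cF_{d-1}$. Given a target $g\in\cF_{d-1}$, I would run the hypothetical efficient $\cT$-learner against $T=x_1\oplus g$, answering each of its queries $\chi$ by
\begin{equation*}
  \bE_{\bx\sim\nu}\chi(\bx,T(\bx))=\tfrac12\,\bE_{\bx'}\chi\big((0,\bx'),g(\bx')\big)+\tfrac12\,\bE_{\bx'}\chi\big((1,\bx'),1\oplus g(\bx')\big),
\end{equation*}
i.e. two statistical queries to the $\cF_{d-1}$ oracle, each with tolerance $\alpha$, whose average is then known to tolerance $\alpha$. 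When the $\cT$-learner outputs a hypothesis $h$ with $\bP_{\bx}(h(\bx)=T(\bx))>1-\varepsilon$, splitting the probability over $x_1=0,1$ forces each half to exceed $1-2\varepsilon$, so $\tilde h(\bx'):=h(0,\bx')$ satisfies $\bP_{\bx'}(\tilde h(\bx')=g(\bx'))>1-2\varepsilon$.

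Putting the pieces together yields the contradiction. If some polynomial $p(d)$ let the student learn an $(\cA,\mu,\varepsilon)$-robust classifier within $p(d)$ feature queries for every $T\in\cT$, then by the first paragraph this is an SQ learner for $\cT$ using $p(d)$ queries at tolerance $\alpha\ge 1/q(d)$, and by the reduction it converts into an algorithm learning every $g\in\cF_{d-1}$ to accuracy $1-2\varepsilon$ within $2p(d)$ queries at tolerance $\alpha\ge 1/q(d)$. Since $d-1$ and $d$ are polynomially equivalent, this contradicts Theorem~\ref{thm: parity} applied on $\{0,1\}^{d-1}$ at the fixed accuracy $2\varepsilon$. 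I expect the main obstacle to be precisely the subclass issue just handled — ensuring that restricting to parities through the first coordinate does not make the problem easier — together with the routine but necessary bookkeeping that the query count, the tolerance, and the accuracy all stay within the polynomial and fixed budgets through the reduction.
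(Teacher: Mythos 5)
Your proposal is correct and follows essentially the same route as the paper's proof: restrict to the subcube $x_1=0$, identify the restricted class $\cT$ with all parities over the remaining $d-1$ coordinates, convert accuracy $1-\varepsilon$ on $\nu$ into accuracy $1-2\varepsilon$ on that subcube, and contradict Theorem~\ref{thm: parity}. You are in fact somewhat more careful than the paper, which asserts the contradiction directly without spelling out the query simulation for the $\cF_{d-1}$ oracle or noting that the $\mu$-samples carry no information about $T$.
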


\begin{proof}
For any $d\geq 2$, let $X_0 = \{(0, x_2, x_3, ..., x_d):\ x_i\in\{0,1\},\ i=2,3,...,d\}$. Assume
that the conclusion of Theorem~\ref{thm: 2} does not hold. Then, there exists an algorithm that for
any teacher $T\in\cT$ it can learn a student model $h$ that satisfies
\begin{equation}\label{eqn: proof1}
  \bP_{\bx\sim\cD}(h(\bx)=T(\bx)) > 1-\epsilon
\end{equation}
with at most $p(d)$ feature queries and a tolerance $\alpha\geq1/q(d)$. Here, $\epsilon$ is a constant and $p(\cdot), q(\cdot)$ are
two polynomials, which may depend on $\epsilon$. Equation~\eqref{eqn: proof1} implies
\begin{equation*}
    \bP_{\bx\sim\cD}(h(\bx)\neq T(\bx)) \leq \epsilon,
\end{equation*}
which can be rewritten as 
\begin{equation*}
    \frac{1}{2^d}\sum\limits_{\bx\in\{0,1\}^d}\mathbf{1}_{h(\bx)\neq T(\bx)} \leq \epsilon.
\end{equation*}
Therefore,
\begin{equation*}
    \sum\limits_{\bx\in X_0} \mathbf{1}_{h(\bx)\neq T(\bx)} \leq \sum\limits_{\bx\in\{0,1\}^d}\mathbf{1}_{h(\bx)\neq T(\bx)} \leq 2^d\epsilon,
\end{equation*}
which directly gives 
\begin{equation*}
    \frac{1}{2^{d-1}}\sum\limits_{\bx\in X_0} \mathbf{1}_{h(\bx)\neq T(\bx)} \leq 2\epsilon,
\end{equation*}
and hence
\begin{equation}\label{eqn: proof2}
    \bP_{\bx\sim \textrm{unif}(X_0)}(h(\bx)=T(\bx)) > 1-2\epsilon.
\end{equation}
By the definition, $\cT$ conditioned on $X_0$ contains all the parity functions of $x_2, ..., x_3$. Hence, Equation~\eqref{eqn: proof2} is contradictory with Theorem~\ref{thm: parity}. This completes the proof. 
\end{proof}

The hardness of learnability has recently been studied in the setting of neural networks~\cite{livni2014computational,goel2020superpolynomial,goel2020statistical}. Therefore, it is possible to extend Theorem~\ref{thm: 2} to more general teachers, e.g. neural network models.

\subsection{An active teacher}\label{ssec: active_teacher}

On the other hand, if the teacher actively provides information to the student, then it is possible to efficiently learn robust classifiers. For the same problem in Theorem~\ref{thm: 2}, if the student knows from the teacher that it is a parity function, then the student can check whether the teacher considers the $i$-th coordinate by querying two data points $(0,0,...,0)$ and $(0,...,0,1,0,...,0)$ where the $1$ in the second data point appears in the $i$-th coordinate. In this way, the student can learn the teacher within $d+1$ data queries. Hence, we have the following theorem:
\begin{theorem}\label{thm: 3}
Let $\cT$, $\mu$, $\nu$, $\cA$ be defined the same as in Theorem~\ref{thm: 2}. The teacher $T$ comes from $\cT$. Let $S$ be a student that can make data query from the teacher, i.e. get $T(\bx)$ from the teacher for any $\bx$. Then, if the student knows the teacher is a parity function, it can learn a strongly adversarially robust classifier within $d+1$ data queries.
\end{theorem}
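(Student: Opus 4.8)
The plan is to give an explicit $(d+1)$-query algorithm that reconstructs the teacher's defining set $S$ exactly, and then observe that exact reconstruction immediately yields strong adversarial robustness. Since each $T\in\cT$ is the parity function $T(\bx)=\tfrac12(1+(-1)^{\sum_{i\in S}\bx_i})$ for some unknown $S\subset[d]$ with $1\in S$, identifying $S$ is equivalent to identifying $T$ on all of $X=\{0,1\}^d$.

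First I would query the all-zeros point $\mathbf{0}=(0,\dots,0)$. Because the empty sum is even, $T(\mathbf{0})=\tfrac12(1+1)=1$ for every $T\in\cT$; this baseline costs one query (and is in fact known a priori). Next, for each coordinate $i\in[d]$ I would query the standard basis vector $\be_i$, the point with a single $1$ in position $i$. A direct computation of the exponent shows that $\sum_{j\in S}(\be_i)_j$ equals $1$ when $i\in S$ and $0$ otherwise, so that $T(\be_i)=0$ exactly when $i\in S$ and $T(\be_i)=1$ exactly when $i\notin S$. Comparing with the baseline gives the membership test $i\in S\iff T(\be_i)\neq T(\mathbf{0})$. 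This uses $d$ further queries, for a total of $d+1$.

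From these answers the student forms $\widehat S=\{i\in[d]:T(\be_i)\neq T(\mathbf{0})\}$ and sets its hypothesis to the parity function defined by $\widehat S$. The identification is exact: the pointwise label queries determine coordinate membership with no error and no tolerance loss, so $\widehat S=S$ and hence the student's classifier agrees with $T$ at every point of $X$. Because the parity teachers classify all of $\{0,1\}^d$ (there is no uncertain region, so $\bigcup_{k}\Omega_{C_k}=X$), exact agreement on $X$ is precisely strong adversarial robustness in the sense of the definition of strong adversarial robustness.

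I do not expect a genuine obstacle here; the content lies in the contrast with Theorem~\ref{thm: 2}. The crucial structural fact is that the parity functions form a $d$-dimensional space over $\mathbb{F}_2$, so probing the basis $\{\be_1,\dots,\be_d\}$ against the fixed baseline $\mathbf{0}$ pins down the unknown $S$ with linearly many exact label queries. This is exactly the information that a statistical query with tolerance $\alpha\geq 1/q(d)$ cannot extract, which is why an active teacher---one that answers pointwise queries $T(\bx)$ rather than noisy correlation queries---breaks the hardness barrier of Theorem~\ref{thm: 2}.
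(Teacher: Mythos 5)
Your proof is correct and follows essentially the same route as the paper's: query $T(\mathbf{0})$ and $T(\be_1),\dots,T(\be_d)$, and recover $S$ from the membership test $i\in S\iff T(\be_i)\neq T(\mathbf{0})$. Your write-up adds a bit more detail (noting $T(\mathbf{0})=1$ a priori and spelling out why exact recovery of $S$ yields strong adversarial robustness), but the underlying argument is identical.
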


\begin{proof}
Let $\mathbf{0}=(0,0,...,0)$, and $\be_i=(0,...,0,1,0,...,0)$ for $i=1,2,...,d$, where the $1$
appears on the $i$-th coordinate. If the student knows that the teacher $T$ comes from parity
functions, it can query $T(\mathbf{0})$ and $\{T(\be_i)\}_{i=1}^d$. Then for any $i=1,2,...,d$,
$T(\mathbf{0})=T(\be_i)$ implies $i\not\in S$, while $T(\mathbf{0})\not=T(\be_i)$ implies $i\in S$.
\end{proof}

Therefore, information directly from an active teacher may help the student find a robust classifier more efficiently. In the following we support this claim by two numerical examples.

\paragraph{Example 1.} Consider a binary classification problem. Let $\bx\in\bR^{100}$ be the input data, and $\bx_{i}$ be the i-th element of $\bx$. Assume $\bx_{i}\in[0,1]$. To assign the label, the teacher only compares the first element $x_{1}$ and the last element $x_{100}$. The teacher assigns label $y=-1$ if $x_{1}>x_{100}$, and $y=1$ if $x_{1}\leq x_{100}$. Obviously a strongly adversarially robust classifier for this problem cannot be $l_2$ robust, because there is no margin between the two classes.
For each class, we uniformly sample $1000$ training data. Linear regression (without bias) is used as the student model. Let the linear regression model be
\begin{equation}
    \hat{y} = \alpha^T\bx,
\end{equation}
and let $\alpha_{i}$ be the coefficients corresponding to $\bx_{i}$. Then the strongly adversarially robust model satisfies $\alpha_1<0$,  $\alpha_{100}=-\alpha_{1}$, and $\alpha_{i}=0$ for other $i$. 

If the student does not have any additional information besides the training data, a plain linear regression is conducted with $100$ variables. A dense vector will be produced, and it will be easy to find adversarial examples by changing $\bx_{i}$'s other than $\bx_{1}$ and $\bx_{100}$ according to the sign of the corresponding coefficients. Specifically, for some $\bx$ correctly classified by the student, assume $\bx_1>\bx_{100}$ without loss of generality, we can construct $\hat{\bx}$ by
\begin{equation*}
    \hat{\bx}_1=\bx_1,\ \hat{\bx}_{100}=\bx_{100},\ \hat{\bx}_i = \bx_i+\epsilon\textrm{sign}(\alpha_i)\ \textrm{for}\ i=2,3,...,99.
\end{equation*}
Then, the prediction of $\hat{\bx}$ can be flipped as long as 
$\epsilon > \alpha^T\bx/\sum_{i=2}^{99}|\alpha_i|$,
while the difference between $\hat{\bx}$ and $\bx$ is always imperceptible to the teacher.
Figure~\ref{fig: 5} shows the coefficients and some adversarial examples in the form of $10\times10$ images. On the other hand, if the student is provided with additional information directly from the teacher beyond the training data, better adversarial robustness may be achieved. For example, if the student is told that the teacher only considers $\bx_{1}$ and $\bx_{100}$, then the student can choose to use a sparse model
\begin{equation}
    \hat{y} = \alpha_1\bx_{1} + \alpha_2\bx_{100}.
\end{equation}
Training this sparse model with the same set of training data, we obtain the model
\begin{equation}\label{eqn: sparse_model}
    \hat{y} = -2.01\bx_{1} + 1.98\bx_{100},
\end{equation}
which is much more robust than the plain linear regression model, because perturbing pixels other than $\bx_{1}$ and $\bx_{100}$ can no longer change the prediction of the model. However, adversarial examples still exist for those $\bx_{1}, \bx_{100}$ that satisfies $-\bx_{1}+\bx_{100}>0$ but $-2.01\bx_{1} + 1.98\bx_{100}<0$, e.g. $\bx_1=1,\ \bx_{100}=1.01$. If we incorporate further information, e.g. the teacher is a linear model that takes integer coefficients, then we can round the coefficients in~\eqref{eqn: sparse_model} and get a model with strong adversarial robustness.

\begin{figure}
    \centering
    \includegraphics[width=0.18\textwidth]{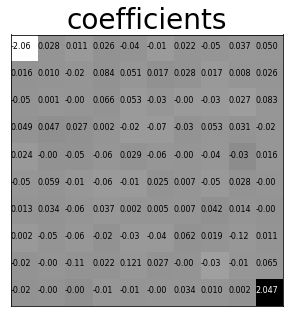}
    \includegraphics[width=0.18\textwidth]{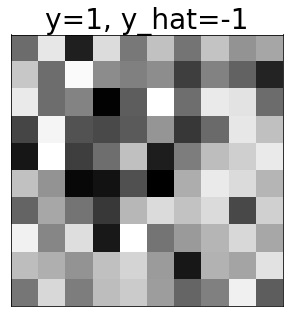}
    \includegraphics[width=0.18\textwidth]{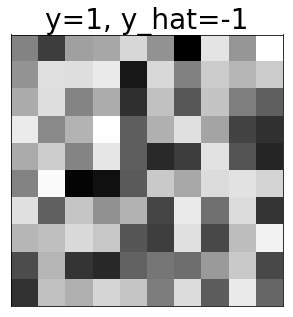}
    \includegraphics[width=0.18\textwidth]{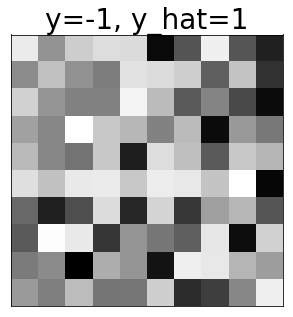}
    \includegraphics[width=0.18\textwidth]{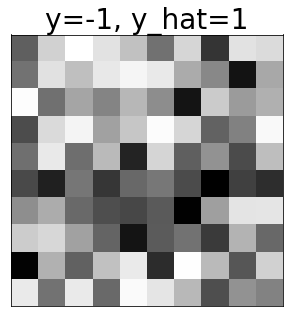}
    \caption{({\bf Left}) the coefficients of the student model corresponding to every entries of $\bx$. ({\bf Others}) some adversarial examples of the student model.}
    \label{fig: 5}
\end{figure}

\paragraph{Example 2.} This example shows that when multiple features can be picked to make generalizable classification, additional teacher information can help the student find the features that lead to a robust model. In this problem, the data are images of a disk or a square with random size and location, and the student model is asked to classify between disks and squares. Except the shapes, we add textures in the squares as a confounding feature. Examples of the data are shown on the first row of Figure~\ref{fig: round_square}. 
In the data distribution which generates the training and testing data ($\mu$), squares always have textures while disks always do not. Therefore, both features---shape and texture---can be used to build a generalizable classifier. However, for the teacher (human) shape and texture have different meanings and the teacher expect the student to use shape for classification. Hence, as an adversarial data distribution ($\nu$), we generate disks with texture and squares without texture, as shown on the second row of Figure~\ref{fig: round_square}. 

\begin{figure}
    \centering
    \begin{subfigure}{0.96\textwidth}
    \centering
    \includegraphics[width=0.18\textwidth]{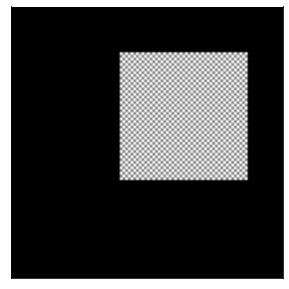}
    \includegraphics[width=0.18\textwidth]{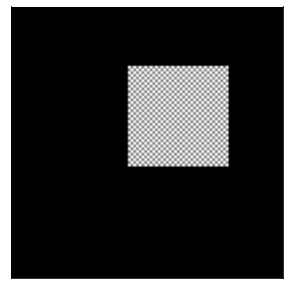}
    \includegraphics[width=0.18\textwidth]{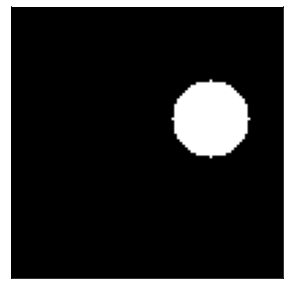}
    \includegraphics[width=0.18\textwidth]{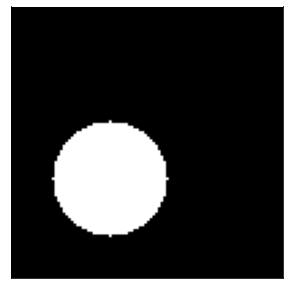}
    \caption{}
    \end{subfigure}
    \begin{subfigure}{0.96\textwidth}
    \centering
    \includegraphics[width=0.18\textwidth]{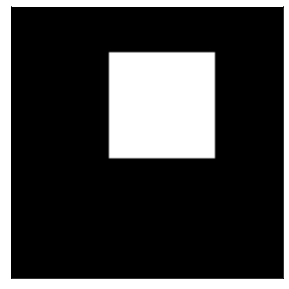}
    \includegraphics[width=0.18\textwidth]{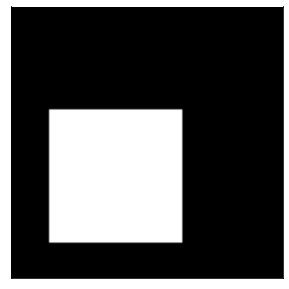}
    \includegraphics[width=0.18\textwidth]{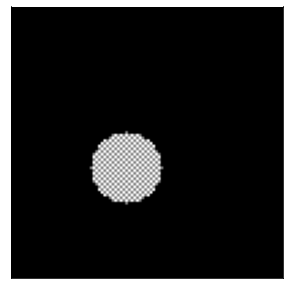}
    \includegraphics[width=0.18\textwidth]{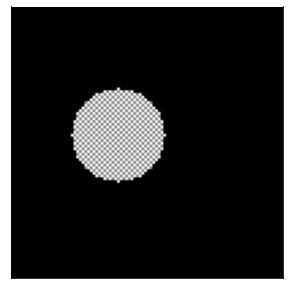}
    \caption{}
    \end{subfigure}
    \caption{(a) Examples of the training and testing data. The location and size of the squares and disks are chosen randomly. The squares have textures while the disks do not. (b) Examples of the data generated by the attack from adversarial distribution $\nu$. Textures appear in the disks instead of the squares..}
    \label{fig: round_square}
\end{figure}

A convolutional neural network is utilized to learn the problem on a training set including $1000$ images, with $500$ squares and $500$ disks. Experiment details are provided in the appendix. $1000$ test samples are randomly generated from $\mu$, and another $1000$ adversarial examples are generated from $\nu$. The left panel of Figure~\ref{fig: round_square_res} shows the accuracy on the test samples and adversarial samples during the training process, when no teacher information except the training data is provided. It shows clearly that the student learns to make classification using textures, hence as the test accuracy goes to $1$ the adversarial accuracy goes to $0$. On the other hand, if the teacher tells the student that the classification should be made depending on the shape, then the student can conduct a low-pass filtering to the images before feeding them into the neural network, to filter out the texture. For this problem, specifically, we use a max pooling with kernel size $3$ and stride $1$ to act as the filtering. The results are shown on the right panel of Figure~\ref{fig: round_square_res}. In this case the adversarial accuracy is nearly as good as the test accuracy. 

\begin{figure}
  \centering
  \includegraphics[width=0.4\textwidth]{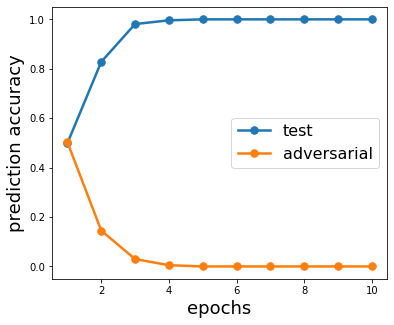}
  \includegraphics[width=0.4\textwidth]{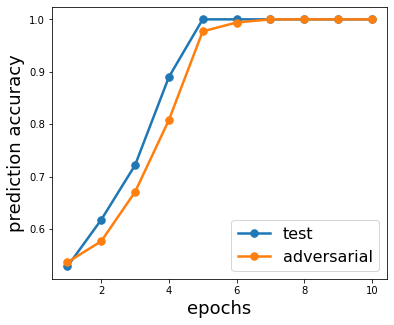}
  \caption{The prediction accuracy on test samples and adversarial samples. ({\bf Left}) The training data are directly used to train the neural network. ({\bf Right}) A max pooling of kernel size $3$ and stride $1$ is conducted before the data is fed into the convolution layers.}
  \label{fig: round_square_res}
\end{figure}

\section{Discussion}
In this paper we make three points about the cause of and the solution to adversarial examples. First, the teacher and the data generator should be considered separately, and adversarial robustness is a relevant concept between the student and the teacher. Second, adversarial examples are caused by the insufficiency of information provided by the training data about the teacher. Third, to solve the insufficiency of teacher information, we suggest that an active teacher is more preferred than an active student with a passive teacher. In the case where the teacher is human, our study suggests that human labelers should provide more information besides the labels and the model should be designed to incorporate the additional information. This is similar to the case when people are learning. For example, when human teachers teaches image recognition to human students, they usually describe features about the objects. The description of features certainly contains more information than just labels. 

Moreover, in complicated learning problems the features are often hierarchical. In deep learning, one often prefers end-to-end training and relies on the models to automatically learn the hierarchical structure of the features. Our study, however, demonstrates that including information of the feature hierarchy may help the student model be robust. Similar methodology has been studied in a different context. In~\cite{shalev2017failures}, it is shown that decomposition learning can be efficient when end-to-end learning is impossible. 

Strictly speaking, any model or algorithm encodes certain prior information and hence exhibits certain ``implicit bias''. The model performs well when its implicit bias coincides with the prior of the teacher. This is especially crucial in the over-parameterized regime where there are many solutions which perfectly fit the training data but only a small fraction of them generalize well. In the case of adversarial robustness, however, we require another level of implicit bias: the solutions picked by the model not only have to generalize well on the test data provided by the data generator, but also need to generalize to regions that are not sufficiently represented by the data generator. This is also a topic studied by out-of-distribution generalization~\cite{sun2019test,krueger2020out} and distribution shift~\cite{quionero2009dataset}. However, existing models cannot provide satisfactory implicit bias to learn human-like classifiers. They are either too simple (like the linearity of linear regression and the sparsity of LASSO), or hard to interpret (like deep neural networks). It is very important to design models that can directly and explicitly incorporate interpretable information provided by humans. We leave this as a major direction of future work. 

Finally, other than achieving robustness, a model whose prior knowledge is better aligned with that of humans may also help in few-shot learning, meta-learning, and model interpretation. It is an inevitable step to achieve higher levels of artificial intelligence than today's deep learning.

{
\bibliographystyle{plain}
\bibliography{adv_robustness}
}

\newpage
\appendix

\section{Experiment details}
In this section we provide the experimental details of the second example in Section~\ref{ssec:
  active_teacher}. The experiments are run on a 2020 Macbook Pro 13' with 16GB RAM, and the neural
networks are implemented and trained by Pytorch.

\subsection{Data}
The data are images with $100\times100$ pixels. The half side length of the squares and the radius
of the disks are uniformly sampled from integers within $[12,25]$, which roughly corresponds to
$[a/8, a/4]$, where $a$ is the side length of the images. The centers of the shapes are then
uniformly sampled from pixels so that the whole shape is within the image. For example, for a square
with half side length 20, the center is a pixel $(x,y)$ with $x$ and $y$ uniformly sampled from
integers within $[20, 78]$. Then, the area of the square is $[x-20,x+20]\times[y-20,y+20]$. The
images are in gray scale, with each pixel taking values in $[0,1]$. The background pixels take
values $0$ while the pixels in the shape are $1$. The texture is added to the shape by changing the
value from $1$ to $0.5$ for the pixels $(x,y)$ with $x+y$ being even and leaving the value at other
pixels unchanged.

For training, we sample $500$ images of squares with texture and $500$ images of disks without
texture, forming a data set consisting of $1000$ images. For testing, we sample $1000$ new image
each time testing is conducted. The images still consist of squares with textures and disks without
textures. The probability of squares is $0.5$. When measuring adversarial performance, each time we
sample $1000$ images of squares without texture and disks with texture. The probabilities of squares
and disks are still $0.5$.

\subsection{Model}
We use a multi-layer convolutional neural network (CNN) as the student model. The neural network has
$3$ convolution layers and $2$ max pooling layers in the middle of convolution layers. An average
pooling and a fully connected layer follow the convolution layers. Specifically, the architecture of
the neural network is

{\centering
Input ($100\times100\times1$)\\
$\downarrow$\\
Convolution with $16$ channels ($100\times100\times16$)\\
$\downarrow$\\
Max pooling the kernel size 2 and stride 2 ($50\times50\times16$)\\
$\downarrow$\\
Convolution with $32$ channels ($50\times50\times32$)\\
$\downarrow$\\
Max pooling the kernel size 2 and stride 2 ($25\times25\times32$)\\
$\downarrow$\\
Convolution with $32$ channels ($25\time25\times32$)\\
$\downarrow$\\
Average pooling the kernel size 5 and stride 5 ($5\times5\times32$)\\
$\downarrow$\\
Reshape and fully connected layer ($2$)\\
}
\ \\

The cross entropy loss is used as the loss function. Adam is taken as the optimizer, with learning
rate $0.001$ and default momentum factors $(0.9,0.999)$. The network is trained by $10$ epochs and
the batch size is $50$.

\end{document}